%% file: main.tex
\documentclass[10pt]{article} 

\usepackage[preprint]{rlj} 

\usepackage{amssymb}            
\usepackage{mathtools}          
\usepackage{mathrsfs}           
\usepackage{graphicx}           
\usepackage{subcaption}         
\usepackage[space]{grffile}     
\usepackage{url}                
\usepackage{lipsum}             

\input{setup/packages}
\input{setup/math}

\input{setup/defs}

\title{Offline RL with Hierarchical Action Chunking}

\setrunningtitle{Offline RL with Hierarchical Action Chunking}

\author{Ahad Jawaid}

\emails{ahad.jawaid@utdallas.edu}

\affiliations{
\textbf{The University of Texas at Dallas}
}

\keywords{Reinforcement Learning,
Hierarchical Reinforcement Learning, Action Chunking} 

\begin{document}

\maketitle  

\input{content/abstract}


\input{content/body}

\appendix

\input{content/appendix}



\bibliography{main}
\bibliographystyle{rlj}

\beginSupplementaryMaterials

\input{content/supplementary}

\end{document}

%% file: setup/packages.tex
\usepackage{algorithm}
\usepackage{algorithmicx}
\usepackage[noEnd,commentColor=black]{algpseudocodex}

\usepackage{amsmath}
\usepackage{amssymb}
\usepackage{mathtools}
\usepackage{amsthm}
\usepackage{cancel}

\usepackage{quiver}

\usepackage{thmtools} 
\usepackage{thm-restate}

\usepackage{hyperref}       
\usepackage[capitalize,noabbrev]{cleveref}

\crefname{assumption}{Assumption}{Assumption}
\theoremstyle{plain}

\usepackage[textsize=tiny]{todonotes}

\usepackage[utf8]{inputenc} 
\usepackage[T1]{fontenc}    

\usepackage{url}            
\usepackage{booktabs}       

\usepackage{times}

\usepackage{xcolor} 
\usepackage{graphicx}
\usepackage{blindtext}
\usepackage[labelformat=empty, position=top]{subcaption}
\usepackage[export]{adjustbox}

\usepackage{wrapfig}
\usepackage{pifont}

\usepackage[utf8]{inputenc} 
\usepackage[T1]{fontenc}    
\usepackage{url}            
\usepackage{booktabs}       
\usepackage{amsfonts}       
\usepackage{nicefrac}       
\usepackage{microtype}      
\usepackage{xcolor}         

\usepackage{amsmath}
\usepackage{amssymb}
\usepackage{booktabs}
\usepackage{multirow}
\usepackage{multicol}
\usepackage{soul}
\usepackage{tikz-cd}

\usepackage{transparent}

\usepackage{tcolorbox}
\tcbuselibrary{theorems}
\usepackage{thmtools}

\definecolor{ourcolor}{HTML}{00A89D}
\definecolor{ourpurple}{HTML}{CB297B}
\definecolor{ourblue}{HTML}{0076BA}
\definecolor{ourmiddle}{HTML}{66509B}
\definecolor{ourlightblue}{HTML}{F5FAFE}
\definecolor{ourlightmiddle}{HTML}{F7F6F9}

\declaretheoremstyle[
    headfont=\bfseries,
    bodyfont=\normalfont,
    spaceabove=10pt, spacebelow=10pt,
    headpunct={},
    postheadspace=1em,
    mdframed={
        backgroundcolor=ourcolor!5,
        linecolor=ourcolor!50!black,
        linewidth=1pt,
        roundcorner=4pt
    }
]{colorboxed}

\declaretheoremstyle[
    headfont=\bfseries,
    bodyfont=\normalfont,
    spaceabove=10pt, spacebelow=10pt,
    headpunct={},
    postheadspace=1em,
    mdframed={
        backgroundcolor=ourblue!5,
        linecolor=ourblue!50!black,
        linewidth=1pt,
        roundcorner=4pt
    }
]{bluecolorboxed}

\declaretheoremstyle[
    headfont=\bfseries,
    bodyfont=\normalfont,
    spaceabove=10pt, spacebelow=10pt,
    headpunct={},
    postheadspace=1em,
    mdframed={
        backgroundcolor=ourpurple!5,
        linecolor=ourpurple!50!black,
        linewidth=1pt,
        roundcorner=4pt
    }
]{purplecolorboxed}

\declaretheoremstyle[
    headfont=\bfseries,
    bodyfont=\normalfont,
    spaceabove=10pt, spacebelow=10pt,
    headpunct={},
    postheadspace=1em,
    mdframed={
        backgroundcolor=black!5,
        linecolor=black!50!black,
        linewidth=1pt,
        roundcorner=4pt
    }
]{greycolorboxed}

\declaretheorem[name=Theorem,style=colorboxed]{theorem}
\declaretheorem[name=Lemma,style=purplecolorboxed]{lemma}

\declaretheorem[name=Assumption,style=bluecolorboxed]{assumption}

\usepackage{tikz}
\usetikzlibrary{decorations.pathreplacing,calc}

%% file: setup/math.tex
\usepackage{amsmath,amsfonts,bm, stackengine}

\newcommand{\res}[2]{%
    #1\,\stackengine{-0.3ex}{}{\scriptsize\textcolor{gray}{[#2]}}{O}{l}{F}{F}{L}%
}

\DeclarePairedDelimiterX{\infdivx}[2]{(}{)}{%
  #1\;\delimsize\|\;#2%
}

\def\eqref#1{equation~\ref{#1}}

\def\1{\bm{1}}

\DeclareMathAlphabet{\mathsfit}{\encodingdefault}{\sfdefault}{m}{sl}
\SetMathAlphabet{\mathsfit}{bold}{\encodingdefault}{\sfdefault}{bx}{n}



%% file: setup/defs.tex
\definecolor{darkblue}{rgb}{0,0.08,0.45}
\definecolor{cred}{HTML}{D62728}
\definecolor{cblue}{HTML}{1F77B4}
\definecolor{cgreen}{HTML}{79AB76}
\definecolor{cgrey}{rgb}{0.6,0.6,0.6}

\renewcommand{\mathbf}{\boldsymbol}

\makeatletter

%% file: content/abstract.tex
\begin{abstract}
Offline goal-conditioned reinforcement learning (RL) holds the promise of learning general-purpose policies from static datasets. However, scaling these methods to long-horizon tasks remains a challenge due to the ``curse of horizon'', where value estimation errors can compound through long chains of bootstrapped Bellman backups. Existing hierarchical approaches mitigate this by decomposing tasks into subgoals, yet they often rely on low-level controllers that suffer from myopic execution and biased value estimates. In this work, we propose \textbf{Hierarchical Implicit Q-Chunking (HiQC)}, an offline goal-conditioned RL algorithm that combines high-level latent planning with low-level action chunking. By conditioning the low-level critic on temporally extended action sequences, HiQC enables unbiased $k$-step value backups, compressing the horizon at both the planning and execution levels. We theoretically demonstrate that this dual decomposition results in a tighter bound on value error under a bounded per-backup error model compared to standard hierarchy or flat chunking alone. Empirically, HiQC achieves the highest aggregate performance among the compared methods on the OGBench suite, with its largest gains on long-horizon navigation tasks such as \texttt{humanoid-giant}.
\end{abstract}

%% file: content/body.tex
\begin{figure}[ht]
    \centering
    \includegraphics[width=\linewidth]{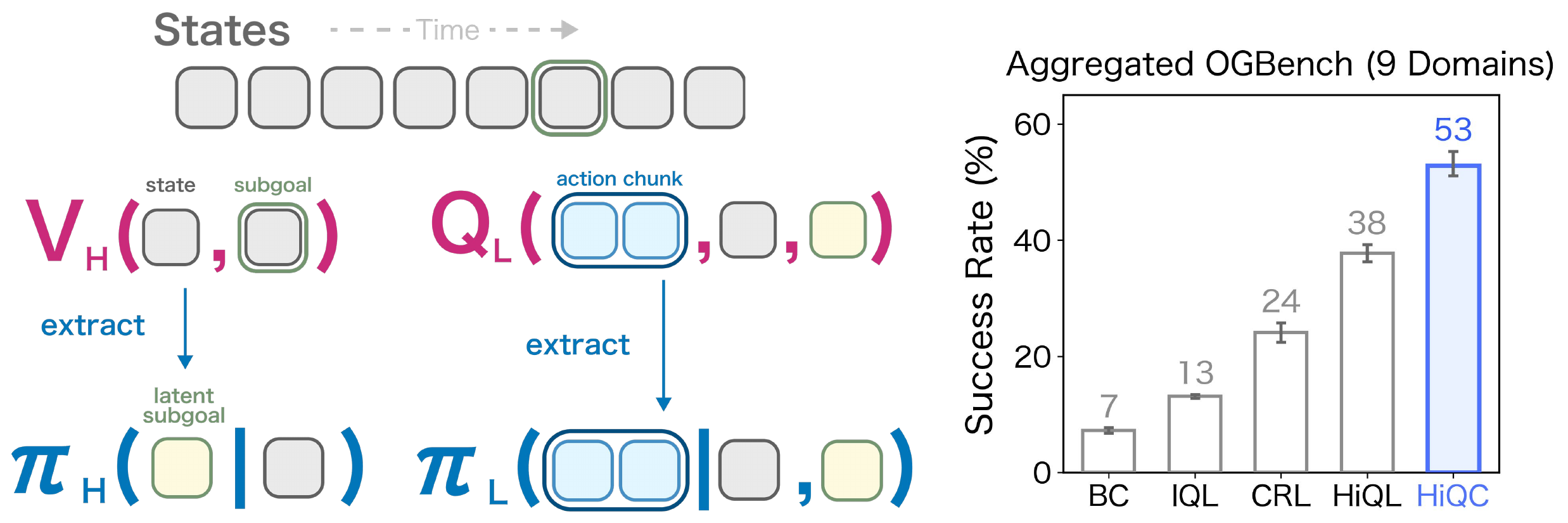}
    \caption{\textbf{Hierarchical Implicit Q-Chunking (HiQC).} HiQC addresses the curse of horizon in offline goal-conditioned RL through a two-level temporal decomposition. The \textbf{high-level planner} (left) operates in a latent space, predicting subgoals $z$ over a coarse horizon $c$ to bridge distant states. The \textbf{low-level controller} (right) executes action chunks $\mathbf{a}$ of length $k$ to reach these subgoals. By conditioning the low-level critic on full action chunks, HiQC enables unbiased $k$-step value backups, reducing the recursion depth required for value estimation while keeping low-level execution temporally consistent.}
    \label{fig:hero}
\end{figure}

\section{Introduction}

Offline reinforcement learning (RL) provides a framework for deriving optimal policies from static datasets, circumventing the need for expensive online interaction \citep{levine2020offline}. An obstacle to the success of offline RL in complex environments is the \textit{curse of horizon} \citep{liu2018breaking, park2025horizon}. Popular offline RL algorithms rely on temporal difference (TD) learning \citep{kostrikov2022iql}, which minimizes the error between a value estimate and a bootstrapped Bellman target \citep{sutton1988learning, sutton2018reinforcement}. In the offline setting, extrapolation errors on out-of-distribution actions introduce bias into this target \citep{kumar2020conservative}. As the task horizon increases, these biases can compound through long chains of recursive Bellman backups, rendering value estimates unreliable for distant goals \citep{park2025horizon}.

The bias accumulation can be mitigated by employing $n$-step returns, which skips $n$ steps of rewards before bootstrapping. However, as $n$ grows, this approach suffers from high variance, particularly in stochastic environments or when the data distribution deviates from the policy \citep{brandfonbrener2021offline}. Hierarchical reinforcement learning (HRL) offers an alternative mechanism for horizon reduction by decomposing tasks into sequences of high-level subgoals \citep{nachum2018data, park2023hiql}. Yet, standard hierarchies often rely on low-level policies that execute single actions to reach these subgoals. Consequently, the low-level policy remains susceptible to the curse of horizon over the duration of the subgoal, while the high-level policy depends on a potentially unstable low-level controller \citep{park2025horizon}.

Independently, \textit{action chunking} has emerged as a technique in imitation learning to assist in generating temporally coherent actions where policies predict sequences of future actions rather than single steps \citep{zhao2023learning}. Recent work has shown that within TD learning, action chunking allows the critic to perform unbiased $n$-step backups over the length of the chunk, effectively compressing the execution horizon \citep{li2025qchunking}. While action chunking is useful for low-level policies, it only reduces the horizon by a constant factor, which may be insufficient for long-horizon tasks.

In this work, we introduce \textbf{Hierarchical Implicit Q-Chunking (HiQC)}, an offline goal-conditioned RL algorithm designed to mitigate error accumulation through a dual horizon reduction on both the planning horizon and the execution horizon. In HiQC, the low-level policy operates directly on action chunks as the action space. This allows the low-level critic to utilize unbiased $k$-step returns. Simultaneously, the high-level policy decomposes the long-horizon task into a sequence of subgoals with a shorter horizon. By enforcing this chunk-based low-level policy, HiQC compresses the effective horizon at both the planning and execution levels, reducing the recursion depth required for value estimation.

Our main contributions operate at three distinct levels:
\begin{itemize}
    \item \textbf{Core structural idea.} We introduce \textbf{Hierarchical Implicit Q-Chunking (HiQC)}, an offline goal-conditioned RL algorithm whose central structure combines a latent hierarchical planner with a low-level controller that operates on action chunks. This design enables stable long-horizon learning by ensuring unbiased value propagation at the low level and compressed planning at the high level.
    \item \textbf{Theoretical analysis.} We analyze this structure under a bounded per-backup error model, showing that decomposing the task horizon $T$ into subgoals and action chunks of size $k$ yields a value estimation error bound scaling with $\mathcal{O}(\sqrt{T/k})$ rather than the $\mathcal{O}(T)$ of flat TD learning or the $\mathcal{O}(\sqrt{T})$ of standard hierarchies. This analysis is intended to build intuition for the benefit of the dual decomposition within a simplified bootstrap-chain model, rather than to provide end-to-end guarantees for the full algorithm.
    \item \textbf{Practical instantiation and empirical validation.} We instantiate the structure with expectile-based value learning, advantage-weighted regression, and a Conditional Flow Matching chunk policy, and evaluate it on OGBench \citep{park2025ogbench}. HiQC achieves the best aggregate score, with its largest gains on long-horizon navigation (e.g., \texttt{humanoid-giant}, \texttt{pointmaze-giant}).
\end{itemize}

\section{Related Work}

\textbf{Offline Reinforcement Learning.} Offline RL algorithms aim to learn optimal policies from static datasets without online interaction \citep{levine2020offline}. A challenge in this setting is the distributional shift between the learned policy and the behavioral policy, which leads to value overestimation for out-of-distribution actions \citep{kumar2020conservative}. To address this, prior methods have employed constraints on the policy \citep{fujimoto2019off}, conservative value estimation \citep{kumar2020conservative}, or in-sample advantage weighted regression such as Implicit Q-Learning (IQL) \citep{kostrikov2022offline}. While these methods have shown success, they often struggle in long-horizon tasks due to the accumulation of temporal difference (TD) errors over many time steps, a phenomenon characterized as the curse of horizon \citep{park2025horizon}. Our work builds upon the value learning stability of IQL but addresses the horizon bottleneck through hierarchical and low-level action abstractions.

\textbf{Hierarchical Reinforcement Learning.} Hierarchical RL (HRL) decomposes long-horizon problems into tractable sub-problems, typically formalized under the options framework \citep{sutton1999between}. In the context of goal-conditioned RL, this often manifests as a high-level policy generating subgoals for a low-level controller \citep{nachum2018data}. While early methods operated in the raw observation space, recent approaches have demonstrated the efficacy of planning over latent representations. Notably, Hierarchical Implicit Q-Learning (HIQL) \citep{park2023hiql} learns a two-level hierarchy from a single value function, where the high-level policy predicts latent subgoals and the low-level policy reaches them. Although HIQL effectively reduces the decision horizon for the high-level policy, its low-level controller operates at a single-step granularity, leaving it susceptible to local horizon effects and high-frequency actuation noise. HiQC extends this paradigm by integrating action chunking into the low-level controller, further compressing the effective horizon.

\textbf{Action Chunking in Learning.} Action chunking, the prediction of action sequences rather than single atomic actions, has gained prominence in imitation learning to model temporal correlations and mitigate compounding errors \citep{zhao2023learning, intelligence2025pi05, park2024hierarchical}. In the realm of reinforcement learning, Q-Chunking \citep{li2025qchunking} adapted this concept to TD learning, demonstrating that using action chunks on the critic allows for unbiased $n$-step backups. This effectively accelerates value propagation and encourages temporally coherent exploration. Variations such as Decoupled Q-Chunking \citep{li2025decoupled} have further refined this by separating the critic's chunk length from the policy's execution horizon. HiQC unifies these streams of research: we employ the latent goal-reaching structure of HIQL to handle global task decomposition into subgoals, while leveraging the temporal abstraction of Q-Chunking at the local level to ensure robust low-level execution and efficient value learning.

\section{Preliminaries}

\textbf{Problem Setting.} We consider the offline goal-conditioned reinforcement learning setting modeled as a Markov decision process (MDP) $\mathcal{M} = (\mathcal{S}, \mathcal{A}, p, r, \gamma)$, where $\mathcal{S}$ is the state space, $\mathcal{A}$ is the action space, $p(s'|s, a)$ is the transition dynamics, and $\gamma \in[0, 1)$ is the discount factor. We assume the goal space $\mathcal{G}$ coincides with the state space $\mathcal{S}$. The reward function $r(s, g)$ typically indicates binary success (e.g., $\mathbb{I}[s=g]$) or sparse distance. We are provided with a static dataset $\mathcal{D} = \{\tau^{(i)}\}_{i=1}^N$ consisting of trajectories $\tau = (s_0, a_0, s_1, \dots, s_T)$. The objective is to learn a goal-conditioned policy $\pi(a|s, g)$ that maximizes the expected cumulative discounted reward from $\mathcal{D}$ without further environment interaction.

\textbf{Implicit Q-Learning (IQL).} IQL \citep{kostrikov2022offline} is an offline RL algorithm that avoids querying values for out-of-distribution actions—a common source of value overestimation—by treating the Bellman maximization as an expectile regression problem. IQL learns a value function $V_\psi(s, g)$ and a Q-function $Q_\theta(s, a, g)$ by minimizing the following objectives:
\begin{align}
    L_V(\psi) &= \mathbb{E}_{s, a, g \sim \mathcal{D}} \left[ L_2^\tau \left( Q_{\hat{\theta}}(s, a, g) - V_\psi(s, g) \right) \right], \\
    L_Q(\theta) &= \mathbb{E}_{s, a, s', g \sim \mathcal{D}} \left[ \left( r(s, g) + \gamma V_\psi(s', g) - Q_\theta(s, a, g) \right)^2 \right],
\end{align}
where $L_2^\tau(u) = |\tau - \mathbb{I}(u < 0)|u^2$ is the expectile loss with parameter $\tau \in (0.5, 1)$. The value function $V_\psi$ approximates the expectile of the Q-values, which implicitly estimates the value of the best actions supported by the data. The policy is then extracted via advantage-weighted regression (AWR) \citep{peng2019advantage}.

\begin{wrapfigure}{r}{0.25\textwidth}
    \centering
    \includegraphics[width=\linewidth]{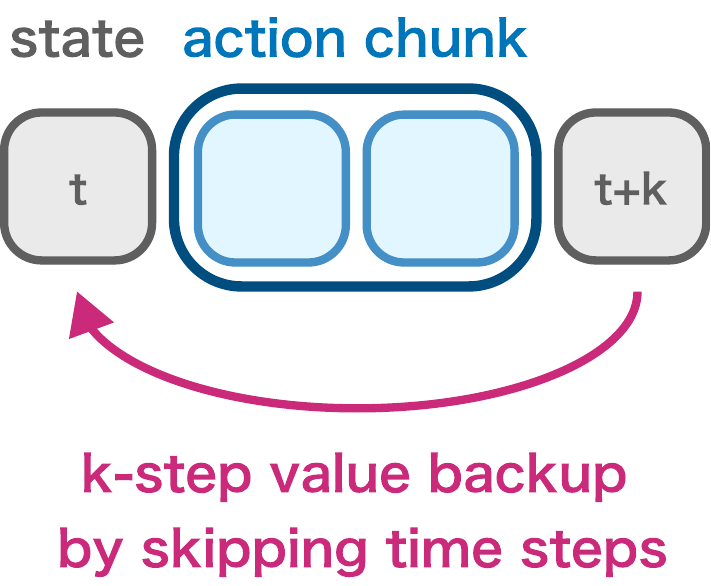}
    \vspace{-10pt}
    \caption{\small \textbf{$k$-step Value Backup.} Action chunks enable unbiased value propagation by skipping intermediate steps.}
    \label{fig:value_backup}
\end{wrapfigure}

\textbf{Q-Learning with Action Chunking.} Action chunking extends the standard MDP formulation by treating sequences of actions as atomic units. Let $\mathbf{a}_{t:t+k} = (a_t, a_{t+1}, \dots, a_{t+k-1}) \in \mathcal{A}^k$ denote an action chunk of length $k$. In this setting, the policy $\pi(\mathbf{a}_{t:t+k} | s_t)$ predicts a sequence of $k$ actions, which are executed open-loop. 

Standard approaches using $n$-step returns in offline RL often introduce bias because the intermediate actions in the dataset may not match the learned policy \citep{li2025qchunking}. Q-Chunking addresses this by training a critic $Q(s_t, \mathbf{a}_{t:t+k})$ conditioned on the entire action sequence. Because the critic evaluates the specific sequence present in the buffer, the $k$-step Bellman backup becomes unbiased with respect to that sequence (see Figure~\ref{fig:value_backup}):
\begin{equation}
    Q(s_t, \mathbf{a}_{t:t+k}) \leftarrow r(s_t) + \gamma^k V(s_{t+k}).
\end{equation}
This formulation allows for efficient value propagation over the horizon $k$ while avoiding the off-policy bias inherent in standard multi-step returns \citep{li2025qchunking}.

\section{Hierarchical Implicit Q-Chunking}
\label{sec:method}

We propose \textbf{Hierarchical Implicit Q-Chunking (HiQC)}, a hierarchical offline goal-conditioned RL algorithm designed to mitigate the curse of horizon by combining latent high-level planning with temporally extended low-level execution. HiQC decomposes the learning problem into two levels: a high-level (H) policy that plans a sequence of subgoals in a learned latent space over a horizon $c$, and a low-level (L) policy that executes action chunks of length $k$ to reach these subgoals.

This dual temporal decomposition is the core structure of HiQC, and it is the object of the theoretical analysis in Section~\ref{sec:theory}. To instantiate the structure as a practical algorithm, we make several implementation choices---expectile-based value learning \citep{kostrikov2022offline}, advantage-weighted policy extraction \citep{peng2019advantage}, and a flow-matching low-level policy \citep{lipmanflow}---which we describe below and assess separately in our ablation studies (Section~\ref{sec:experiments}).

\subsection{High-Level Learning: Latent Planning}

Following the architecture of HIQL \citep{park2023hiql}, we train a high-level value function $V_H(s, g)$ parametrized as $V_H(s, g) \approx w(s, \phi(g))$, where $\phi: \mathcal{S} \rightarrow \mathcal{Z}$ maps states to a latent subgoal space $\mathcal{Z}$. This parameterization simultaneously learns the value function and the representation $\phi$ required for planning.

The high-level value function estimates the discounted return of reaching the global goal $g$ from state $s$ with a temporal abstraction of $c$ steps (the subgoal horizon). We define the goal-conditioned reward function as $r(s, g) = \mathbb{I}(s=g) - 1$, where the agent receives $0$ upon reaching the goal and $-1$ otherwise. We train $V_H$ using Implicit V-Learning (IVL) \citep{kostrikov2022offline}, minimizing the expectile loss over $c$-step returns:
\begin{equation}
    \label{eq:vh_loss}
    \mathcal{L}_{V_H}(\psi) = \mathbb{E}_{\tau \sim \mathcal{D}, t} \left[ L_2^\tau \left( r(s_{t}, g) + \gamma^c V_H(s_{t+c}, g) - V_H(s_t, g) \right) \right].
\end{equation}
The high-level policy $\pi_H(z|s, g)$ is trained to predict the latent subgoal $z = \phi(s_{t+c})$ that leads to high-value states. We employ advantage-weighted regression (AWR) \citep{peng2019advantage} to maximize the likelihood of subgoals $z$ that have high advantage:
\begin{equation}
    \label{eq:pih_loss}
    \mathcal{L}_{\pi_H}(\vartheta) = \mathbb{E}_{\tau \sim \mathcal{D}, t} \left[ \exp\left( \frac{V_H(s_{t+c}, g) - V_H(s_t, g)}{\alpha} \right) \log \pi_H(\phi(s_{t+c}) | s_t, g) \right].
\end{equation}

\subsection{Low-Level Learning: Q-Chunking}

The low-level controller operates on action chunks $\mathbf{a} \in \mathcal{A}^k$, where $\mathbf{a} = (a_t, \dots, a_{t+k-1})$ represents a sequence of $k$ atomic actions. The goal of the low-level policy is to execute a chunk that reaches the latent subgoal $z$ specified by the high level. Note that the chunk size $k$ is an execution hyperparameter distinct from the planning horizon $c$. We define the low level subgoal-conditioned reward function as $r(s, z) = \mathbb{I}(s=z) - 1$, where the agent receives $0$ upon reaching the goal and $-1$ otherwise.

\textbf{Chunked Critic and Value.} To enable unbiased multi-step backups, we apply Q-Chunking \citep{li2025qchunking} to the low-level critic. We define a chunk-conditioned Q-function $Q_L(s, \mathbf{a}, z)$ and a state-value function $V_L(s, z)$. We train $V_L$ and $Q_L$ using IQL adapted for action chunks. The value function $V_L$ approximates the expectile of the chunk Q-values, while $Q_L$ regresses to a $k$-step target derived from the dataset transitions:
\begin{align}
    \label{eq:vl_loss}
    \mathcal{L}_{V_L}(\eta) &= \mathbb{E}_{s, \mathbf{a}, z \sim \mathcal{D}} \left[ L_2^\tau \left( Q_L(s, \mathbf{a}, z) - V_L(s, z) \right) \right], \\
    \label{eq:ql_loss}
    \mathcal{L}_{Q_L}(\theta) &= \mathbb{E}_{s, \mathbf{a}, s_{t+k}, z \sim \mathcal{D}} \left[ \left( r(s, z) + \gamma^k V_L(s_{t+k}, z) - Q_L(s, \mathbf{a}, z) \right)^2 \right].
\end{align}
Conditioning $Q_L$ on the full action chunk $\mathbf{a}$ removes the off-policy bias typically associated with $n$-step returns, allowing the low level to propagate values efficiently over the chunk horizon $k$ without divergence.

\textbf{Flow-Based Action Chunking Policy.} To model the complex, high-dimensional distribution of action sequences, we parameterize the low-level policy $\pi_L(\mathbf{a} | s, z)$ using Conditional Flow Matching (CFM) \citep{lipmanflow, zhangenergy}. The policy is defined by a vector field $v_\omega(t, x, s, z)$ that transforms noise $x_0 \sim \mathcal{N}(0, I)$ into an action chunk $\mathbf{a}$. We extract the optimal policy via advantage-weighted flow matching:
\begin{equation}
    \label{eq:pil_loss}
    \mathcal{L}_{\pi_L}(\omega) = \mathbb{E}_{t, x_t, s, \mathbf{a}, z \sim \mathcal{D}} \left[ \exp\left(\frac{Q_L(s, \mathbf{a}, z) - V_L(s, z)}{\beta}\right) \left\| v_\omega(t, x_t, s, z) - u_t(x|\mathbf{a}) \right\|^2 \right],
\end{equation}
where $u_t(x|\mathbf{a})$ is the target vector field and $\beta$ is a temperature parameter.

\begin{algorithm}[ht]
\caption{Hierarchical Implicit Q-Chunking (HiQC)}
\label{alg:hiqc}
\begin{algorithmic}[1]
\State \textbf{Input:} Dataset $\mathcal{D}$, subgoal horizon $c$, chunk size $k$.
\State \textbf{Initialize:} $V_H, \pi_H, V_L, Q_L, \pi_L$.
\While{training}
    \State Sample batch of trajectories $\tau \sim \mathcal{D}$.
    \State \textbf{// High-Level Update (Planning Horizon $c$)}
    \State Sample time steps $t$. Set current state $s_t$, goal $g$.
    \State Compute latent target $z = \phi(s_{t+c})$.
    \State Update $V_H$ via Implicit V-Learning using Eq.~\ref{eq:vh_loss}.
    \State Update $\pi_H$ via AWR using Eq.~\ref{eq:pih_loss}.
    \State \textbf{// Low-Level Update (Execution Horizon $k$)}
    \State Extract action chunk $\mathbf{a} = a_{t:t+k}$.
    \State Update $V_L$ via Chunked IQL using Eq.~\ref{eq:vl_loss}.
    \State Update $Q_L$ via $k$-step regression using Eq.~\ref{eq:ql_loss}.
    \State Update $\pi_L$ via Advantage-Weighted Flow Matching using Eq.~\ref{eq:pil_loss}.
\EndWhile
\State \textbf{Inference:}
\State Given current state $s$ and global goal $g$:
\State \quad 1. Sample latent subgoal $z \sim \pi_H(\cdot | s, g)$.
\State \quad 2. Generate action chunk $\mathbf{a} \sim \pi_L(\cdot | s, z)$.
\State \quad 3. Execute $\mathbf{a}$ open-loop for $k$ steps.
\State \quad 4. Repeat (High-level planning may recur at frequency $1/c$ or $1/k$).
\end{algorithmic}
\end{algorithm}

\section{Theoretical Analysis}
\label{sec:theory}

We analyze HiQC using a \emph{bootstrap-chain} error model: value learning propagates information backward by repeatedly applying bootstrapped targets, and each bootstrap step can introduce bounded error. The key quantity is therefore the \emph{number of bootstraps} required to propagate information over the horizon.

\subsection{Bootstrap-Chain Error Model}

Let $T$ be the task horizon in \emph{atomic} environment steps. Consider any value-learning scheme that propagates values backward through a sequence of bootstrapped updates. We define:

\textbf{Bootstrap depth.} Let $D$ denote the number of bootstraps (Bellman-style backups) that must be chained to propagate terminal signal back to the start state under a given temporal abstraction.

\begin{assumption}[Bounded one-backup error]
\label{ass:one_backup}
For a given value recursion, each bootstrap step introduces at most $\epsilon$ value error in magnitude. For hierarchical methods we use $\epsilon_H$ for the high-level recursion and $\epsilon_L$ for the low-level recursion.
\end{assumption}

Under Assumption~\ref{ass:one_backup}, the value error grows at most linearly with the number of chained bootstraps.

\begin{lemma}[Error grows with bootstrap depth]
\label{lem:depth}
If a value estimate is obtained by chaining $D$ bootstraps and each bootstrap contributes at most $\epsilon$ error (Assumption~\ref{ass:one_backup}), then the resulting value error is bounded by
\begin{equation}
\label{eq:depth_bound}
\mathcal{E} \;\le\; D\,\epsilon.
\end{equation}
\end{lemma}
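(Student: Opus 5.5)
The plan is to formalize the bootstrap chain as a backward recursion and bound the error by induction on the chain, using the triangle inequality at each step. Write the chain as $\hat V_0, \hat V_1, \dots, \hat V_D$. Here $\hat V_0$ is the terminal value, which is known exactly from the reward signal. Each subsequent estimate is obtained by one bootstrapped backup applied to the previous one:
\[
\hat V_{j} = \mathcal{T}_j \hat V_{j-1} + e_j .
\]
In this expression, $\mathcal{T}_j$ is the exact (population) backup under the relevant temporal abstraction, whether a single step, a $k$-step chunk, or a $c$-step subgoal. The term $e_j$ is the error introduced by that backup, which by Assumption~\ref{ass:one_backup} satisfies $|e_j|\le\epsilon$. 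Let $V_j = \mathcal{T}_j V_{j-1}$ with $V_0=\hat V_0$ denote the corresponding exact chain. The quantity to bound is $\mathcal{E} = |\hat V_D - V_D|$.

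The key step is to show that the exact backup does not amplify errors already present in its input:
\[
|\mathcal{T}_j \hat V_{j-1} - \mathcal{T}_j V_{j-1}| \le \gamma^{m_j}|\hat V_{j-1}-V_{j-1}| \le |\hat V_{j-1}-V_{j-1}|,
\]
where $m_j\ge1$ is the number of atomic steps spanned by the backup. This holds because each backup of the form $r + \gamma^{m} V(s')$, including the chunked target in Eq.~\ref{eq:ql_loss} and the high-level target in Eq.~\ref{eq:vh_loss}, is affine in the bootstrapped value with coefficient $\gamma^{m}\in[0,1)$. Expectile or max aggregation is also $1$-Lipschitz in sup-norm, so the same nonexpansion holds after aggregation. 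With this in hand, the triangle inequality gives
\[
\delta_j := \|\hat V_j - V_j\|_\infty \le \delta_{j-1} + \epsilon,
\]
and since $\delta_0=0$, induction yields $\delta_D \le D\epsilon$. A sharper bound $\epsilon\sum_{j}\gamma^{\cdot}$ is also available from this recursion, but dropping the discount factors is all that is needed for the stated inequality.

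The main obstacle is interpretive rather than technical. Assumption~\ref{ass:one_backup} only says each step ``introduces at most $\epsilon$ error,'' so the proof must fix what that means. I would read it as the additive residual $e_j$ relative to the exact backup applied to the current estimate, and I would state explicitly that errors are measured in sup-norm. Under that reading the nonexpansiveness argument above goes through. I would add a remark that if instead $\epsilon$ were meant to bound the error of the composite estimate after each step, the lemma would be immediate, and that in either case no $\gamma$-contraction beyond $\gamma\le 1$ is required.
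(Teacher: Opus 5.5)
Your proof is correct and has the same skeleton as the paper's: zero error at the terminal end of the chain, a residual of size at most $\epsilon$ added by each of the $D$ backups, and unrolling to get $D\epsilon$. The difference is in how error is carried through a single backup. The paper subtracts the exact recursion from the approximate one and writes the scalar identity $e_j = e_{j+1} + \delta_j$. That identity tacitly assumes the exact bootstrap target passes the downstream error through with coefficient one. For the targets HiQC actually uses, $r + \gamma^{c} V_H(s_{t+c}, g)$ and $r + \gamma^{k} V_L(s_{t+k}, z)$, the literal relation is $e_j = \gamma^{m} e_{j+1} + \delta_j$, and with expectile aggregation over actions it is only an inequality. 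You instead prove the nonexpansion bound $\|\mathcal{T}_j \hat V - \mathcal{T}_j V\|_\infty \le \gamma^{m_j}\|\hat V - V\|_\infty$ and combine it with the triangle inequality. This works because affine maps with coefficient $\gamma^{m_j}\le 1$, expectations over next states, and monotone translation-equivariant aggregations such as max or expectiles are all $1$-Lipschitz in sup-norm. The cost is an explicit choice of error norm and an explicit reading of Assumption~\ref{ass:one_backup} as a residual relative to the exact backup of the current estimate. The gain is a proof that is valid for discounted, stochastic, and expectile-based recursions, and it exposes the sharper discounted bound that the paper's unit-coefficient identity throws away. The paper's one-line computation is exact only in the undiscounted linear case, though its conclusion survives because $\gamma^{m}\le 1$.
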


Lemma~\ref{lem:depth} is proved by a one-line unrolling argument in Appendix~\ref{app:proofs}. We now compute $D$ for TD learning, flat chunking, standard hierarchy, and HiQC.

\subsection{Baselines}

For readability we assume $T$ is divisible by $k$ and $c$, and $c$ is divisible by $k$.\footnote{If not divisible, the same bounds hold up to rounding by replacing ratios with $\lceil\cdot\rceil$. This does not change the scaling and is omitted for clarity.}

\paragraph{Standard TD learning (IQL).}
TD bootstraps every atomic step, so the chain length is $D=T$. By Lemma~\ref{lem:depth},
\begin{equation}
\label{eq:td}
\mathcal{E}_{\text{TD}}(T) \;\le\; T\,\epsilon.
\end{equation}

\paragraph{Flat action chunking (QC).}
With chunk size $k$, the critic bootstraps every $k$ steps, reducing the number of bootstraps to $D=T/k$. By Lemma~\ref{lem:depth},
\begin{equation}
\label{eq:qc}
\mathcal{E}_{\text{QC}}(T,k) \;\le\; \frac{T}{k}\,\epsilon.
\end{equation}

\paragraph{Standard hierarchy (HIQL-style).}
With subgoal spacing $c$, the high level bootstraps every $c$ steps, so its chain length is $D_H=T/c$ and contributes $(T/c)\epsilon_H$. The low level must bridge $c$ atomic steps using one-step bootstraps, so $D_L=c$ and contributes $c\,\epsilon_L$. Summing the two gives
\begin{equation}
\label{eq:hiql}
\mathcal{E}_{\text{HIQL}}(T,c)
\;\le\;
\frac{T}{c}\,\epsilon_H \;+\; c\,\epsilon_L.
\end{equation}

\subsection{HiQC}

HiQC uses the same high-level recursion as HIQL (bootstrapping every $c$ steps), but its low-level critic bootstraps over \emph{chunks} of length $k$. To bridge a subgoal interval of length $c$, the low level therefore chains $D_L=c/k$ chunk-bootstraps (rather than $c$ one-step bootstraps). Applying Lemma~\ref{lem:depth} at each level yields:

\begin{theorem}[HiQC bootstrap-chain bound]
\label{thm:hiqc}
Under Assumption~\ref{ass:one_backup}, HiQC satisfies
\begin{equation}
\label{eq:hiqc}
\mathcal{E}_{\text{HiQC}}(T,c,k)
\;\le\;
\frac{T}{c}\,\epsilon_H \;+\; \frac{c}{k}\,\epsilon_L.
\end{equation}
Moreover, the right-hand side is minimized at
\begin{equation}
\label{eq:c_star}
c^\star \;=\; \sqrt{\frac{Tk\,\epsilon_H}{\epsilon_L}},
\end{equation}
with minimum value
\begin{equation}
\label{eq:hiqc_opt}
\min_c \mathcal{E}_{\text{HiQC}}
\;\le\;
2\sqrt{\epsilon_H\epsilon_L}\,\sqrt{\frac{T}{k}}
\;=\;
\mathcal{O}\!\left(\sqrt{\frac{T}{k}}\right).
\end{equation}
\end{theorem}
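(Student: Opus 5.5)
The argument splits into two parts: the bound~\eqref{eq:hiqc}, which comes from Lemma~\ref{lem:depth} applied at each level, and the optimization over $c$, which is a one-variable calculus (or AM--GM) computation.

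\textbf{Step 1: bootstrap depths.} Propagating the terminal signal from the goal back to the start state decomposes into two chains. The high-level chain goes through the subgoal states $s_0, s_c, s_{2c}, \dots, s_T$. Each link is one application of the $c$-step target in Eq.~\ref{eq:vh_loss}, so $D_H = T/c$. Within a subgoal interval of $c$ atomic steps, the low-level critic of Eq.~\ref{eq:ql_loss} bootstraps once per chunk of $k$ steps. Since $k \mid c$, this gives $D_L = c/k$. Assumption~\ref{ass:one_backup} assigns per-backup error $\epsilon_H$ to the high-level chain and $\epsilon_L$ to the low-level chain. Lemma~\ref{lem:depth} then bounds their contributions by $(T/c)\epsilon_H$ and $(c/k)\epsilon_L$.

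\textbf{Step 2: combining the levels.} I would use the triangle inequality to add the two contributions, exactly as in the derivation of~\eqref{eq:hiql}. This is the step I expect to be the main obstacle, because it is modeling rather than algebra. One has to argue that the end-to-end error is the high-level chain error plus the error of a single low-level bridge, and not $T/c$ copies of the low-level error. Justifying this requires the convention, already used for HIQL in~\eqref{eq:hiql}, that the high-level per-backup error $\epsilon_H$ accounts for executing one subgoal transition. The low-level term then counts only the depth needed to learn the value over one subgoal interval. I would state this convention explicitly so that the HiQC bound is directly comparable to~\eqref{eq:hiql}, with the only change being $D_L = c \mapsto c/k$. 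Together with Step 1, this gives~\eqref{eq:hiqc}.

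\textbf{Step 3: optimizing over $c$.} Set $f(c) = (T\epsilon_H)/c + (\epsilon_L/k)\,c$ for $c>0$. Then $f''(c) = 2T\epsilon_H/c^3 > 0$, so $f$ is strictly convex. Setting $f'(c) = -T\epsilon_H/c^2 + \epsilon_L/k = 0$ gives $c^\star = \sqrt{Tk\epsilon_H/\epsilon_L}$. Substituting, each term equals $\sqrt{T\epsilon_H\epsilon_L/k}$, so
\[
f(c^\star) = 2\sqrt{\epsilon_H\epsilon_L}\,\sqrt{T/k} = \mathcal{O}\bigl(\sqrt{T/k}\bigr).
\]
Equivalently, AM--GM gives $f(c) \ge 2\sqrt{(T\epsilon_H/c)(c\,\epsilon_L/k)}$ for all $c>0$, with equality at $c^\star$. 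If the divisibility assumptions are needed, I would replace $c^\star$ by a nearby admissible integer and use ceilings as in the footnote; this changes only constants. So the minimum over admissible $c$ is bounded by~\eqref{eq:hiqc_opt} up to rounding.
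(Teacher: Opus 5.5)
Your proposal is correct and follows the paper's proof: apply Lemma~\ref{lem:depth} at each level with depths $D_H = T/c$ and $D_L = c/k$, sum the two contributions, and optimize over $c$ via AM--GM. Your convexity and first-order argument reaches the same $c^\star$ and minimum $2\sqrt{\epsilon_H\epsilon_L}\sqrt{T/k}$. Your remark that only one low-level interval's depth is counted, rather than $T/c$ copies, spells out a convention the paper adopts silently when it writes ``within a single high-level interval,'' so it is a useful clarification rather than a deviation.
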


The proof is given in Appendix~\ref{app:proofs}. Intuitively, chunking reduces the number of low-level bootstraps needed per subgoal interval from $c$ to $c/k$, which improves the optimized scaling from $\mathcal{O}(\sqrt{T})$ to $\mathcal{O}(\sqrt{T/k})$ in this bootstrap-depth model.

\textbf{Scope of the analysis.} The bootstrap-chain model abstracts away function approximation, sampling error, and the interaction between the two levels during learning, and Assumption~\ref{ass:one_backup} may not hold uniformly when values are represented by deep neural networks. Moreover, the analysis concerns the structure of the value recursion and is agnostic to the implementation choices of Section~\ref{sec:method} (e.g., expectile regression or the flow-based policy). We therefore view Theorem~\ref{thm:hiqc} as building intuition for why HiQC's dual decomposition reduces error accumulation, rather than as an end-to-end guarantee for the trained system evaluated in Section~\ref{sec:experiments}.

\section{Experiments}
\label{sec:experiments}

In this section, we empirically evaluate Hierarchical Implicit Q-Chunking (HiQC) to determine if the combination of high-level latent planning and low-level action chunking mitigates the curse of horizon. We focus our analysis on three questions: (1) Does HiQC outperform standard hierarchical and flat baselines on long-horizon navigation tasks? (2) How does it compare to recent offline GCRL methods on high-dimensional manipulation tasks? (3) What is the impact of key design choices, such as the flow-based policy parameterization and chunk size?

\subsection{Experimental Setup}

\textbf{Environments.} We evaluate our method on the OGBench suite \citep{park2025ogbench}, a benchmark designed to stress-test offline GCRL algorithms. We select 9 tasks covering two distinct domains: \textit{locomotion} (\texttt{antmaze}, \texttt{humanoidmaze}, \texttt{pointmaze}) and \textit{manipulation} (\texttt{cube}, \texttt{scene}, \texttt{puzzle}). We do not use the oracle state representation in any of our evaluations. The reward is sparse as defined in our preliminaries. Furthermore, for the low-level policy, the reward function retains the same sparse structure but evaluates reaching the generated subgoal rather than the overarching task goal. To test horizon capabilities, we include the ``giant'' variants of navigation tasks (e.g., \texttt{humanoidmaze-giant}), which require reasoning over thousands of time steps and have been shown to break prior methods \citep{park2025horizon}.

\textbf{Baselines.} We compare HiQC against a diverse set of baselines representing three paradigms:
\begin{itemize} \setlength\itemsep{0em}
    \item \textbf{Flat Baselines:} Goal-Conditioned Behavioral Cloning (GCBC), Contrastive RL (CRL) \citep{eysenbach2022contrastive}, and Goal-Conditioned Implicit Q-Learning (GCIQL) \citep{kostrikov2022offline}.
    \item \textbf{Action Chunking Baselines:} Q-Chunking (QC) \citep{li2025qchunking} and Decoupled Q-Chunking (DQC) \citep{li2025decoupled}. These methods utilize temporal abstraction but lack explicit high-level planning.
    \item \textbf{Hierarchical Baselines:} Hierarchical GCBC (HGCBC) \citep{park2025horizon}, HGCBC with Action Chunking (HGCBCAC), and Hierarchical Implicit Q-Learning (HIQL) \citep{park2023hiql}. We implement HGCBCAC  by extending HGCBC to predict action chunks at the low level.
\end{itemize}

\textbf{Implementation Details.} We trained all methods for 1 million gradient steps with a batch size of 1024. Experiments were conducted on NVIDIA RTX 4090 GPUs, with each run taking approximately 1.5 hours. We report results averaged over 4 random seeds. HiQC inherits the default hyperparameters from HIQL \citep{park2023hiql} for the high-level components and from QC \citep{li2025qchunking} for the low-level chunking components, with limited sensitivity checks on the \texttt{scene} environments (Figure~\ref{fig:hyperparams}). All policies and value functions were parameterized as Multi-Layer Perceptrons (MLPs). Full architectural details, the domain-specific $(c, k, \tau)$ values, the \texttt{scene-*} stabilization settings, and a discussion of the seed count and tuning budget are provided in Supplementary Material \ref{app:hyperparameters}.

\begin{figure}[t]
    \centering
    \includegraphics[width=\linewidth]{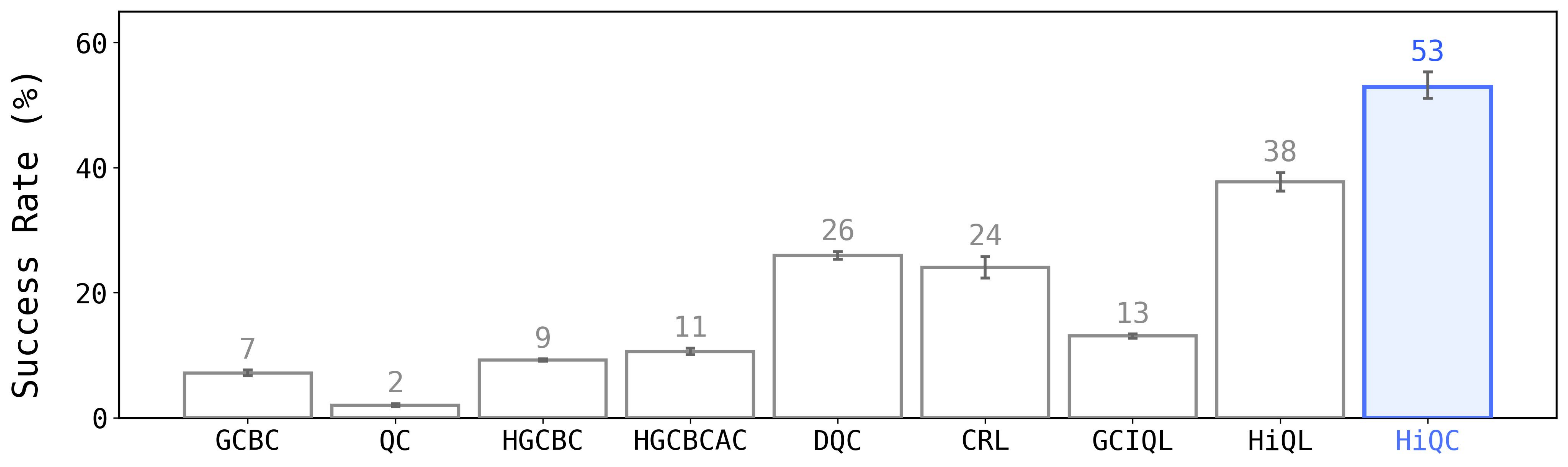}
    \caption{\textbf{Aggregated Performance on OGBench.} We report the mean success rate across 9 diverse domains.}
    \label{fig:benchmark}
\end{figure}

\begin{table*}[t]
\centering
\caption{\textbf{Success rates on OGBench tasks.} We report the mean success rate  95\% confidence interval over 4 seeds. The best performing method for each task is highlighted in \textbf{bold}.}
\label{tab:main_results}
\small
\setlength{\tabcolsep}{3pt} 
\begin{tabular}{l lllllllll}
\toprule
\textbf{Task} & \textbf{BC} & \textbf{QC} & \textbf{HBC} & \textbf{HBCAC} & \textbf{DQC} & \textbf{CRL} & \textbf{GCIQL} & \textbf{HiQL} & \textbf{HiQC} \\
\midrule
antmaze-large    & \res{27}{24,30} & \res{0}{0,0}    & \res{49}{48,51} & \res{54}{52,56} & \res{35}{33,38} & \res{81}{78,84} & \res{32}{25,37} & \res{88}{86,89} & \textbf{\res{93}{92,94}} \\
antmaze-giant    & \res{0}{0,0}    & \res{0}{0,0}    & \res{25}{22,27} & \res{30}{27,33} & \res{1}{0,1}    & \res{14}{11,18} & \res{0}{0,1}    & \textbf{\res{68}{67,69}} & \textbf{\res{68}{67,69}} \\
humanoid-large   & \res{1}{1,2}    & \res{0}{0,0}    & \res{4}{3,5}    & \res{6}{5,7}    & \res{1}{1,1}    & \res{17}{12,23} & \res{1}{1,2}    & \res{34}{23,44} & \textbf{\res{42}{39,44}} \\
humanoid-giant   & \res{0}{0,0}    & \res{0}{0,0}    & \res{1}{0,1}    & \res{0}{0,0}    & \res{0}{0,0}    & \res{4}{2,5}    & \res{0}{0,0}    & \res{10}{7,15}  & \textbf{\res{33}{29,39}} \\
pointmaze-large  & \res{29}{25,34} & \res{15}{13,17} & \res{0}{0,0}    & \res{0}{0,1}    & \res{62}{60,65} & \res{36}{32,41} & \res{29}{24,33} & \res{47}{39,55} & \textbf{\res{87}{81,92}} \\
pointmaze-giant  & \res{1}{0,1}    & \res{0}{0,0}    & \res{0}{0,0}    & \res{0}{0,0}    & \res{35}{29,40} & \res{33}{27,39} & \res{1}{0,3}    & \res{44}{35,53} & \textbf{\res{80}{76,85}} \\
cube-triple      & \res{1}{1,2}    & \res{0}{0,0}    & \res{0}{0,0}    & \res{0}{0,0}    & \textbf{\res{8}{7,10}}   & \res{4}{3,4}    & \res{1}{1,2}    & \res{3}{2,4}    & \textbf{\res{9}{4,15}} \\
puzzle-4x6       & \res{0}{0,0}    & \res{0}{0,0}    & \res{0}{0,0}    & \res{0}{0,0}    & \textbf{\res{16}{14,17}} & \res{5}{4,6}    & \res{5}{4,6}    & \res{4}{4,5}    & \res{1}{1,2} \\
scene-play       & \res{5}{4,5}    & \res{3}{2,3}    & \res{4}{4,5}    & \res{5}{5,6}    & \textbf{\res{76}{75,77}} & \res{22}{21,23} & \res{47}{44,49} & \res{41}{38,44} & \res{65}{61,68} \\
\midrule
\textbf{Overall} & \res{7}{6,8}    & \res{2}{2,2}    & \res{9}{8,10}   & \res{11}{10,11} & \res{26}{23,28} & \res{23}{21,26} & \res{13}{11,15} & \res{42}{37,47} & \textbf{\res{53}{50,56}} \\
\bottomrule
\end{tabular}
\end{table*}

\subsection{Comparative Analysis on OGBench}

We present the main comparative results in Table \ref{tab:main_results} and summarize the aggregate performance in Figure \ref{fig:benchmark}.

\textbf{Long-Horizon Navigation.} HiQC shows its largest gains in long-horizon locomotion tasks. In \texttt{pointmaze-giant} and \texttt{humanoid-giant}, which require traversing paths spanning thousands of time steps, flat methods (GCBC, QC, GCIQL) fail almost completely, illustrating the curse of horizon for flat value functions \citep{park2025horizon}. While standard HIQL achieves non-trivial performance on \texttt{humanoid-giant} ($10\%$), HiQC outperforms it ($33\%$). This result is consistent with our theoretical analysis (Theorem \ref{thm:hiqc}): by utilizing unbiased $k$-step backups in the low-level critic, HiQC compresses the effective horizon, reducing the number of recursive Bellman backups required for value propagation, which typically destabilizes hierarchical execution in giant mazes. On \texttt{antmaze-large}, HiQC ($93\%$) surpasses both HIQL ($88\%$) and CRL ($81\%$). On \texttt{antmaze-giant}, however, HiQC matches HIQL ($68\%$) rather than improving upon it; we note that \texttt{antmaze} uses the smallest chunk size among our domains ($k=2$, Supplementary Material \ref{app:hyperparameters}), which limits the amount of horizon compression available at the low level.

\textbf{High-Dimensional Manipulation.} On manipulation tasks, which require precise sequential object interactions, the results are mixed. HiQC attains the best success rate on \texttt{cube-triple} ($9\%$, comparable to DQC's $8\%$), outperforming QC ($0\%$) and HIQL ($3\%$), and it is the second-best method on \texttt{scene-play} ($65\%$), ahead of HIQL ($41\%$) and GCIQL ($47\%$). However, Decoupled Q-Chunking (DQC) outperforms HiQC on \texttt{scene-play} ($76\%$) and \texttt{puzzle-4x6} ($16\%$ vs.\ $1\%$). A structural difference between the two methods is that DQC decouples its critic horizon from its execution horizon: on these tasks, its critic performs $25$-step backups while its policy executes $5$-step chunks, whereas HiQC's low-level critic horizon equals its executed chunk size ($k=5$) and relies on the hierarchy---through a learned latent subgoal space---for longer-range value propagation. We hypothesize that this decoupling underlies DQC's advantage on these tasks; we examine this performance gap in more detail in Supplementary Material~\ref{app:dqc_comparison}. Overall, these results indicate that HiQC's advantage is concentrated in long-horizon navigation, and its leading aggregate score should not be read as uniform superiority across domains.

\subsection{Ablation Studies}

\begin{wrapfigure}{r}{0.30\textwidth}
    \vspace{-15pt}
    \centering
    \includegraphics[width=\linewidth]{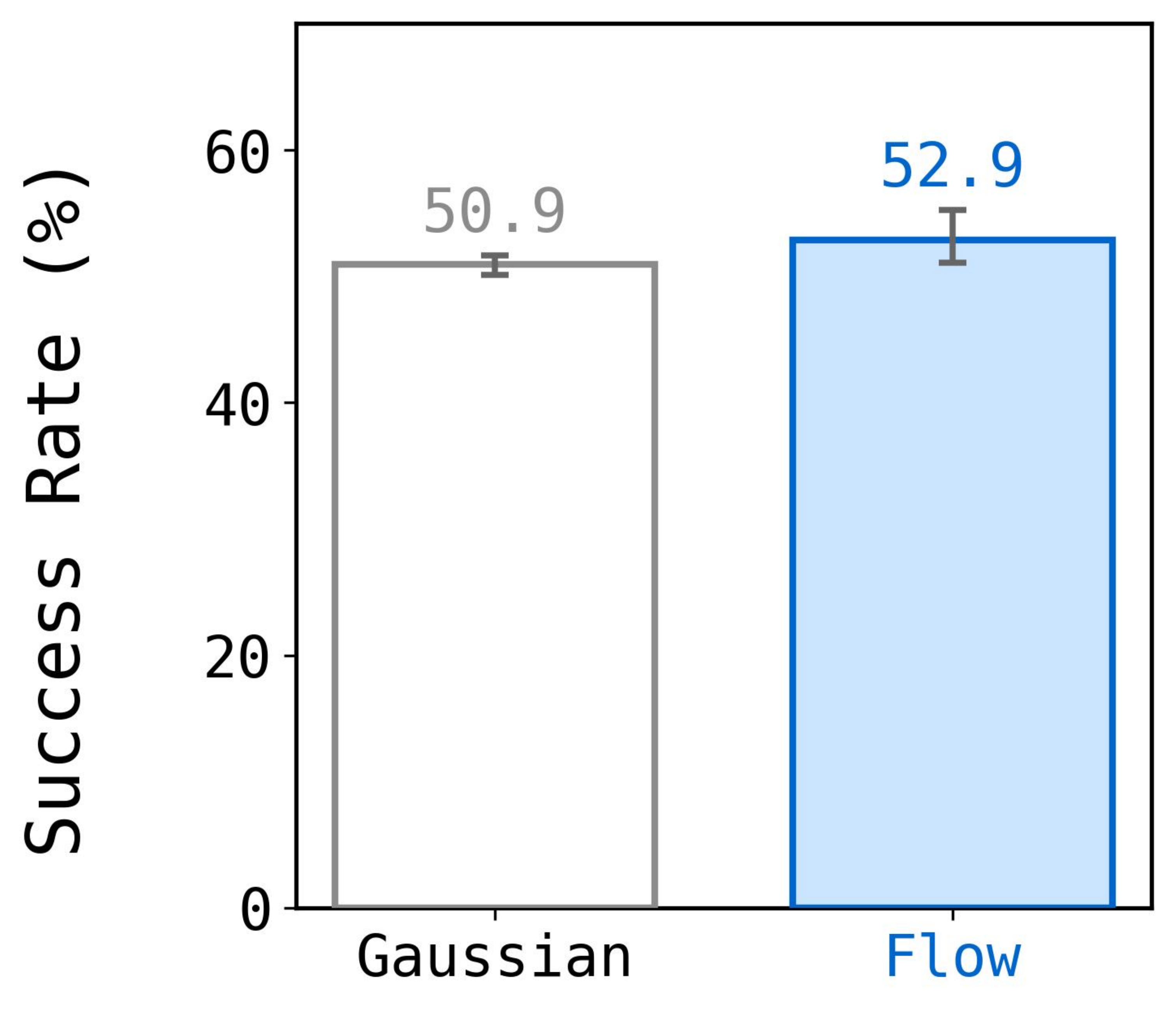}
    \vspace{-15pt}
    \caption{\textbf{Flow vs. Gaussian Policy Ablation.}}
    \label{fig:ablation_flow}
\end{wrapfigure}

\textbf{Impact of Flow Matching Policies.} To assess the impact of our policy parameterization, we compare HiQC using a standard Gaussian policy against our default Conditional Flow Matching (CFM) policy. As shown in Figure \ref{fig:ablation_flow}, the flow-based policy consistently achieves higher success on 9 environments from OGBench. This aligns with prior findings in imitation learning \citep{zhao2023learning} and Q-learning \citep{li2025qchunking}, suggesting that flow matching better captures the multimodal distributions inherent in trajectory data, preventing the ``averaging'' artifacts of Gaussian policies that can disrupt temporally extended actions.

\begin{figure}[ht]
    \vspace{15pt}
    \centering
    \includegraphics[width=\linewidth]{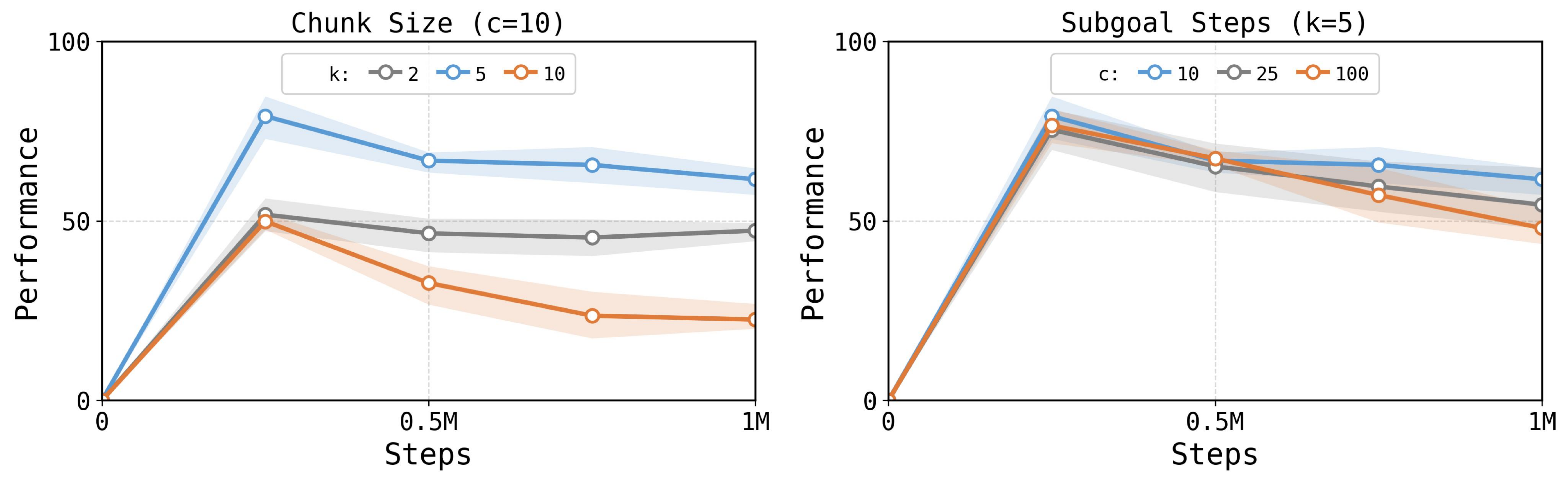}
    \caption{\textbf{Sensitivity Analysis.} Performance as a function of action chunk size $k$ (left) and high-level subgoal interval $c$ (right) evaluated on the \textit{scene-play} environment with 4 seeds.}
    \label{fig:hyperparams}
\end{figure}

\textbf{Sensitivity to Hierarchy and Chunk Size.} We analyze the sensitivity of HiQC to the action chunk size $k$ and the high-level planning horizon $c$ in Figure \ref{fig:hyperparams}. 
\begin{itemize}
    \item \textbf{Chunk Size ($k$):} Performance improves as $k$ increases from 1 to 5, consistent with the benefit of horizon compression. However, excessively large chunks ($k>10$) degrade performance, likely because open-loop execution becomes susceptible to compounding errors in the dynamics, a limitation noted in prior chunking work \citep{li2025qchunking}.
    \item \textbf{Subgoal Horizon ($c$):} We observe a ``sweet spot'' for the high-level planning horizon. If $c$ is too small, the high-level problem retains the complexity of the original task; if $c$ is too large, the low-level policy struggles to reach the distant subgoal. HiQC generally benefits from a larger $c$ than standard HIQL, as the chunked low-level policy can cover more ground per decision step.
\end{itemize}

\section{Discussion}
\label{sec:discussion}

We presented Hierarchical Implicit Q-Chunking (HiQC), a hierarchical offline goal-conditioned RL algorithm that addresses the curse of horizon \citep{park2025horizon} through a dual temporal decomposition: a high-level latent planner that decomposes long-horizon tasks into subgoals spaced by $c$ steps, and a low-level controller that executes action chunks of size $k$ to reach each subgoal. By conditioning the low-level critic on full action chunks, HiQC enables unbiased $k$-step value backups \citep{li2025qchunking}, compressing the effective horizon at both the planning and execution levels. Our theoretical analysis (Theorem~\ref{thm:hiqc}) shows that, under a bounded per-backup error model, this decomposition yields an error bound scaling of $\mathcal{O}(\sqrt{T/k})$, improving over standard two-level hierarchies \citep{park2023hiql} by a factor of $1/\sqrt{k}$ within this model. Empirically, HiQC outperforms baselines on long-horizon navigation tasks in OGBench \citep{park2025ogbench}, with its largest gains on \texttt{humanoid-giant} and \texttt{pointmaze-giant}. On high-dimensional manipulation tasks, its results are mixed: HiQC attains the best success rate on \texttt{cube-triple} but underperforms DQC on \texttt{puzzle-4x6} and \texttt{scene-play} (Supplementary Material~\ref{app:dqc_comparison}).

Several limitations remain. HiQC uses a fixed chunk size $k$ across all states, and as shown in our ablation studies (Figure~\ref{fig:hyperparams}), excessively large chunks degrade performance due to compounding open-loop errors \citep{li2025qchunking}; developing adaptive, state-conditioned chunk boundaries is an open problem shared with prior chunking methods \citep{li2025decoupled}. Furthermore, our theoretical analysis relies on the assumption of bounded per-backup errors, which may not hold strictly when combined with deep neural network function approximation; as discussed in Section~\ref{sec:theory}, we regard the analysis as intuition-building rather than an end-to-end guarantee for the trained system. On the empirical side, our results are averaged over 4 seeds per configuration (see Supplementary Material \ref{app:hyperparameters} for the underlying compute tradeoff), so results on tasks with wide confidence intervals should be interpreted with corresponding caution. Additionally, the high-level value function is unbiased only under deterministic dynamics \citep{park2023hiql}, and as noted by \citet{park2025horizon}, two-level hierarchies only mitigate rather than fundamentally solve error accumulation in TD learning. Extending HiQC to stochastic settings and deeper hierarchies remains an important direction for future work.

%% file: content/appendix.tex
\section*{Appendix}

\section{Theoretical Proofs and Derivations}
\label{app:proofs}

This appendix proves Lemma~\ref{lem:depth} and Theorem~\ref{thm:hiqc} using only unrolling and the AM--GM inequality.

\subsection{Proof of Lemma~\ref{lem:depth}}
\label{app:proof_depth}

\begin{proof}
Consider a sequence of $D$ bootstraps indexed by $j=0,1,\dots,D-1$ (these indices can represent atomic steps, chunk steps, or high-level steps depending on the method). Let $V_j$ be the exact value along this recursion and $\widehat{V}_j$ the learned value. Assume each bootstrap step satisfies
\begin{equation}
\widehat{V}_j \;=\; \text{(exact bootstrap target using $\widehat{V}_{j+1}$)} \;+\; \delta_j,
\qquad |\delta_j|\le \epsilon.
\end{equation}
Define the error $e_j \triangleq \widehat{V}_j - V_j$. Subtracting the exact recursion from the approximate recursion gives
\begin{equation}
e_j \;=\; e_{j+1} + \delta_j,
\end{equation}
where $e_D=0$ at the terminal end of the chain (no further bootstrap). Unrolling from $j=0$ yields
\begin{equation}
e_0 \;=\; \sum_{j=0}^{D-1}\delta_j
\quad\Longrightarrow\quad
|e_0| \;\le\; \sum_{j=0}^{D-1}|\delta_j|
\;\le\; D\,\epsilon.
\end{equation}
This proves \eqref{eq:depth_bound}.
\end{proof}

\subsection{Proof of Theorem~\ref{thm:hiqc}}
\label{app:proof_hiqc}

\begin{proof}
We compute bootstrap depths at each level.

\textbf{High level.} The high-level recursion advances $c$ atomic steps per bootstrap across total horizon $T$, so the bootstrap depth is $D_H=T/c$. By Lemma~\ref{lem:depth}, the high-level contribution is at most $(T/c)\epsilon_H$.

\textbf{Low level.} Within a single high-level interval of length $c$, the low-level chunk recursion advances $k$ atomic steps per bootstrap, so the bootstrap depth is $D_L=c/k$. By Lemma~\ref{lem:depth}, the low-level contribution is at most $(c/k)\epsilon_L$.

Summing gives \eqref{eq:hiqc}. To optimize over $c$, consider
\begin{equation}
f(c) \triangleq \frac{T}{c}\epsilon_H + \frac{c}{k}\epsilon_L, \qquad c>0.
\end{equation}
By AM--GM inequality, for any $c>0$,
\begin{equation}
\frac{T}{c}\epsilon_H + \frac{c}{k}\epsilon_L
\;\ge\;
2\sqrt{\left(\frac{T}{c}\epsilon_H\right)\left(\frac{c}{k}\epsilon_L\right)}
\;=\;
2\sqrt{\epsilon_H\epsilon_L}\sqrt{\frac{T}{k}}.
\end{equation}
Equality holds when the two terms are equal:
\begin{equation}
\frac{T}{c}\epsilon_H = \frac{c}{k}\epsilon_L
\quad\Longrightarrow\quad
c^\star = \sqrt{\frac{Tk\,\epsilon_H}{\epsilon_L}}.
\end{equation}
This proves \eqref{eq:c_star} and \eqref{eq:hiqc_opt}.
\end{proof}

%% file: content/supplementary.tex
\section{Implementation Details and Hyperparameters}
\label{app:hyperparameters}

We provide the full implementation details and hyperparameters for HiQC and all baselines. All algorithms were implemented in JAX. We build on top of the codebases released by \citet{park2023hiql} and \citet{park2025ogbench}.

\subsection{Compute Budget, Seeds, and Tuning}
\label{app:compute_budget}

All results in the paper are averaged over 4 random seeds. While 4 seeds is on the lower end, evaluating 9 methods across 9 environments required training over 300 models from scratch; given this computational expense, we prioritized broad environmental coverage over a larger number of seeds per task, and we note this statistical tradeoff as a limitation of our empirical study. Due to the same compute constraints, we did not tune HiQC's hyperparameters per domain; the inherited configurations are described in the following subsections.

\subsection{Shared Hyperparameters}

Table~\ref{tab:shared_params} describes the common hyperparameters shared across all methods. We follow the network architecture and optimization settings used in HIQL \cite{park2023hiql} and OGBench \cite{park2025ogbench}.

\begin{table}[ht]
    \centering
    \small
    \caption{\textbf{Shared hyperparameters across all methods.}}
    \begin{tabular}{ll}
    \toprule
    \textbf{Parameter} & \textbf{Value} \\
    \midrule
    Optimizer & Adam \\
    Learning rate & $3 \times 10^{-4}$ \\
    Batch size & $1024$ \\
    Discount factor ($\gamma$) & $0.99$ (HIQL, GCIQL, CRL, HiQC), $0.999$ (QC, DQC) \\
    Target network update rate ($\tau$) & $0.005$ \\
    Number of training steps & $10^6$ \\
    Network hidden dimensions & $(512, 512, 512)$ \\
    Nonlinearity & GELU \\
    Layer normalization & True (first two layers) \\
    \midrule
    Value goal sampling $(w_\mathrm{cur}^\mathrm{v}, w_\mathrm{traj}^\mathrm{v}, w_\mathrm{rand}^\mathrm{v})$ & $(0.2, 0.5, 0.3)$ \\
    Value geometric sampling & True \\
    Actor goal sampling $(w_\mathrm{cur}^\mathrm{p}, w_\mathrm{traj}^\mathrm{p}, w_\mathrm{rand}^\mathrm{p})$ & $(0.0, 1.0, 0.0)$ \\
    \bottomrule
    \end{tabular}
    \label{tab:shared_params}
\end{table}

For baseline methods utilizing flow matching policies (DQC, QC), the policy is parameterized by a conditional vector field $v_\omega(t, x, s, z)$ and integrated using the Euler method with 10 flow steps during inference. For HiQC and methods using Gaussian policies (GCBC, HGCBC, HGCBCAC, HIQL, GCIQL, CRL), we use a constant standard deviation parameterization for the actors, following \citet{park2023hiql}.

\subsection{HiQC Hyperparameters}

HiQC combines a high-level latent planner from HIQL \cite{park2023hiql} with a low-level chunked critic from Q-Chunking \cite{li2025qchunking}. As noted in Section~\ref{app:compute_budget}, we directly inherited the default hyperparameters from HIQL for the high-level components and from QC for the low-level chunking components, performing only limited sensitivity checks on the \texttt{scene} environments (Figure~\ref{fig:hyperparams}). The domain-specific values reported below therefore reflect these inherited configurations rather than settings tuned for HiQC. Table~\ref{tab:hiqc_shared} reports the HiQC-specific shared hyperparameters, and Table~\ref{tab:hiqc_params} reports the task-specific hyperparameters.

\begin{table}[ht]
    \centering
    \small
    \caption{\textbf{HiQC-specific hyperparameters (shared across all tasks).}}
    \begin{tabular}{ll}
    \toprule
    \textbf{Parameter} & \textbf{Value} \\
    \midrule
    Latent goal dimension ($|\mathcal{Z}|$) & $10$ \\
    High-level AWR temperature ($\alpha_H$) & $3.0$ \\
    Low-level AWR temperature ($\alpha_L$) & $3.0$ \\
    \bottomrule
    \end{tabular}
    \label{tab:hiqc_shared}
\end{table}

\begin{table}[ht]
    \centering
    \small
    \caption{\textbf{HiQC task-specific hyperparameters.} We adjust the subgoal horizon $c$, chunk size $k$, and expectile $\tau$ based on the domain. Locomotion domains use a lower expectile ($0.5$) while manipulation domains use higher expectiles ($0.9$--$0.93$).}
    \begin{tabular}{l c c c}
    \toprule
    \textbf{Domain} & \textbf{Subgoal Steps ($c$)} & \textbf{Chunk Size ($k$)} & \textbf{Expectile ($\tau$)} \\
    \midrule
    \texttt{pointmaze-*} & 25 & 5 & 0.5 \\
    \texttt{antmaze-*} & 25 & 2 & 0.5 \\
    \texttt{humanoidmaze-*} & 100 & 5 & 0.5 \\
    \texttt{cube-*} & 10 & 5 & 0.93 \\
    \texttt{scene-*} & 10 & 5 & 0.9 \\
    \texttt{puzzle-*} & 10 & 5 & 0.9 \\
    \bottomrule
    \end{tabular}
    \label{tab:hiqc_params}
\end{table}

The high-level value function is trained via Implicit V-Learning \cite{kostrikov2022offline} using $c$-step returns (Eq.~\ref{eq:vh_loss}), while the low-level critic uses an explicit Q-function with $k$-step Bellman backups (Eq.~\ref{eq:ql_loss}). We found that the explicit Q-function was important for the low-level critic to enable standard TD error minimization over action chunks. For \texttt{scene-*}, we used a reduced learning rate of $10^{-6}$, target update rate of $10^{-5}$, and low-level AWR temperature $\alpha_L = 1.0$ for training stability; these settings arose from the limited sensitivity checks noted above and are the only HiQC-specific deviations from the inherited configurations.

\subsection{Baseline Hyperparameters}

Baseline hyperparameters were selected to match the configurations reported in their respective original papers or the OGBench benchmark \cite{park2025ogbench}. For HIQL and GCIQL, we follow the hyperparameters from \citet{park2023hiql} and \citet{park2025ogbench}, respectively. For DQC and QC, we follow \citet{li2025decoupled} and \citet{li2025qchunking}. Table~\ref{tab:baseline_params} reports the task-specific hyperparameters for all baselines.

\begin{table}[ht]
    \centering
    \small
    \caption{\textbf{Task-specific hyperparameters for baselines.} For each method, we report only the parameters that deviate from the shared defaults in Table~\ref{tab:shared_params}. All other parameters use the shared defaults.}
    \setlength{\tabcolsep}{5pt}
    \scalebox{0.85}{
    \begin{tabular}{l cc cc c}
    \toprule
    & \multicolumn{2}{c}{\textbf{HIQL}} & \multicolumn{2}{c}{\textbf{GCIQL}} & \textbf{CRL} \\
    \cmidrule(lr){2-3} \cmidrule(lr){4-5} \cmidrule(lr){6-6}
    \textbf{Domain} & $c$ & $\alpha$ & $\tau$ & $\alpha$ & $\alpha$ \\
    \midrule
    \texttt{pointmaze-*} & 25 & 3.0 & 0.9 & 0.003 & 0.03 \\
    \texttt{antmaze-*} & 25 & 3.0 & 0.9 & 0.3 & 0.1 \\
    \texttt{humanoidmaze-*} & 100 & 3.0 & 0.9 & 0.1 & 0.1 \\
    \bottomrule
    \end{tabular}}
    \label{tab:baseline_params}
\end{table}

\begin{table}[ht]
    \centering
    \small
    \caption{\textbf{Task-specific hyperparameters for action chunking baselines.} DQC and QC use $\gamma = 0.999$ and flow matching policies. HGCBC and HGCBCAC use Gaussian policies.}
    \setlength{\tabcolsep}{3pt}
    \scalebox{0.82}{
    \begin{tabular}{l cccc cc cc}
    \toprule
    & \multicolumn{4}{c}{\textbf{DQC}} & \multicolumn{2}{c}{\textbf{QC}} & \multicolumn{2}{c}{\textbf{HGCBCAC}} \\
    \cmidrule(lr){2-5} \cmidrule(lr){6-7} \cmidrule(lr){8-9}
    \textbf{Domain} & $h$ & $h_a$ & $\kappa_b$ & $\kappa_d$ & $h{=}h_a$ & $\kappa_b$ & $c$ & $k$ \\
    \midrule
    \texttt{pointmaze-*} & 25 & 5 & 0.9 & 0.5 & 5 & 0.9 & 25 & 5 \\
    \texttt{antmaze-*} & 25 & 5 & 0.9 & 0.5 & 5 & 0.9 & 25 & 2 \\
    \texttt{humanoidmaze-*} & 25 & 1 & 0.5 & 0.8 & 5 & 0.5 & 100 & 5 \\
    \texttt{cube-triple} & 25 & 5 & 0.93 & 0.8 & 5 & 0.93 & 10 & 5 \\
    \texttt{scene-*} & 25 & 5 & 0.9 & 0.5 & 5 & 0.9 & 10 & 5 \\
    \texttt{puzzle-4x6} & 25 & 5 & 0.7 & 0.5 & 5 & 0.7 & 10 & 5 \\
    \bottomrule
    \end{tabular}}
    \label{tab:chunking_baseline_params}
\end{table}

\textbf{HGCBC / HGCBCAC.} These hierarchical behavioral cloning baselines use the same high-level architecture as HiQC (latent subgoal prediction with AWR) but lack the value-based planning component. HGCBC predicts single actions at the low level, while HGCBCAC extends HGCBC to predict action chunks. Both use Gaussian policy parameterizations with the same subgoal horizons as HiQC (Table~\ref{tab:hiqc_params}). For HGCBCAC, the chunk size is $k=2$ for \texttt{antmaze-*} and $k=5$ for all other domains.

\textbf{GCBC.} Goal-Conditioned Behavioral Cloning uses a Gaussian policy with the same network architecture as the other methods. No value function is trained.

\section{Comparison with DQC on Manipulation Tasks}
\label{app:dqc_comparison}

In Table~\ref{tab:main_results}, Decoupled Q-Chunking (DQC) \citep{li2025decoupled} outperforms HiQC on \texttt{puzzle-4x6} ($16\%$ vs.\ $1\%$) and \texttt{scene-play} ($76\%$ vs.\ $65\%$), while the two are comparable on \texttt{cube-triple} ($8\%$ vs.\ $9\%$). Here we analyze which structural differences between the two methods are consistent with the gap.

\textbf{Structural differences.} The configurations in Tables~\ref{tab:hiqc_params} and \ref{tab:chunking_baseline_params} isolate two differences on these tasks:
\begin{itemize} \setlength\itemsep{0em}
    \item \textbf{Critic horizon.} DQC's critic performs unbiased backups over $h=25$ atomic steps while its policy executes $h_a=5$ actions. HiQC's low-level critic backs up only over its executed chunk ($k=5$) and covers longer ranges through the hierarchy ($c=10$). DQC thus propagates value over a longer span per backup than either level of HiQC's decomposition.
    \item \textbf{Latent subgoal bottleneck.} HiQC routes long-range information through a $10$-dimensional latent subgoal $\phi(s)$ (Table~\ref{tab:hiqc_shared}), and its low-level policy targets the latent subgoal rather than the task goal. DQC conditions directly on the task goal.
\end{itemize}
Both methods execute $5$-action chunks open-loop on these tasks, so executed chunk length alone is unlikely to explain the gap.

\textbf{Hypothesized mechanisms.} We hypothesize that DQC's advantage stems from (i) its longer decoupled critic horizon and (ii) the absence of a latent bottleneck, which may under-specify the discrete object configurations (e.g., button states in \texttt{puzzle}) that define progress; if latent subgoals fail to encode object state, the low-level policy receives an uninformative target regardless of execution quality. HiQC also required reduced learning rates for stable training on \texttt{scene-*} (Section~\ref{app:hyperparameters}), indicating the chunked low-level critic is harder to optimize in these domains. These explanations remain hypotheses; attributing the gap definitively would require a controlled decomposition (e.g., varying DQC's critic horizon or replacing HiQC's latent subgoals with raw goal states), which we leave to future work.